\documentclass{article}

\RequirePackage{expl3}
\ExplSyntaxOn
\ExplSyntaxOff

\usepackage[preprint]{neurips_2025} 

\usepackage[utf8]{inputenc} 
\usepackage[T1]{fontenc}    
\usepackage{hyperref}       
\usepackage{url}            
\usepackage{booktabs}       
\usepackage{amsfonts}       
\usepackage{nicefrac}       
\usepackage{microtype}      
\usepackage{xcolor}         

\usepackage{graphicx}
\usepackage{float}

\usepackage{tabularray}
\UseTblrLibrary{booktabs}

\IfFileExists{preamble.tex}{\usepackage{bm}
\usepackage{amsmath}
\usepackage{amssymb}
\usepackage{amsthm}
\usepackage[scr=zapfc]{mathalpha}
\usepackage{cleveref}
\usepackage{tabularray}
\usepackage{graphicx}
\usepackage{subcaption}
\usepackage{float}
\usepackage{wrapfig}
\usepackage{pifont}
\usepackage{lipsum}
\usepackage{marvosym}
\usepackage{enumitem}

\renewcommand{\d}{\mathrm{d}}

\theoremstyle{plain}

\newtheorem{proposition}{Proposition}
}{}

\title{Riemannian Motion Generation \\ {\large A Unified Framework for Human Motion Representation and Generation via Riemannian Flow Matching}}

\author{%
  \textbf{Fangran Miao}$^{1}$
  ~~
  \textbf{Jian Huang}$^{1,2}$$\textsuperscript{\Letter}$
  ~~
  \textbf{Ting Li}$^{3}$$\textsuperscript{\Letter}$ \thanks{Email: \texttt{\href{mailto:fangran.miao@connect.polyu.hk}{fangran.miao@connect.polyu.hk}, \href{mailto:j.huang@polyu.edu.hk}{j.huang@polyu.edu.hk}, \href{mailto:liting@sustech.edu.cn}{liting@sustech.edu.cn}}}\\
  {\small $^{1}$Department of Data Science and Artificial Intelligence, The Hong Kong Polytechnic University}\\
  {\small $^{2}$Department of Applied Mathematics, The Hong Kong Polytechnic University}\\
  {\small $^{3}$Department of Data Science and Statistics, Southern University of Science and Technology}\\
  $\textsuperscript{\Letter}$Corresponding Author \\
  \raisebox{-0.25em}{\includegraphics[height=1.15em]{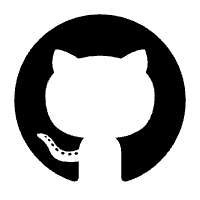}}\ \href{https://frank-miao.github.io/RMG-Project-Page}{\texttt{Project Page}}
}

\begin{document}

\maketitle

\begin{figure}[H]
  \vspace{-1.0\baselineskip}
  \centering
  \begin{minipage}{0.34\textwidth}
    \centering
    \includegraphics[width=\linewidth]{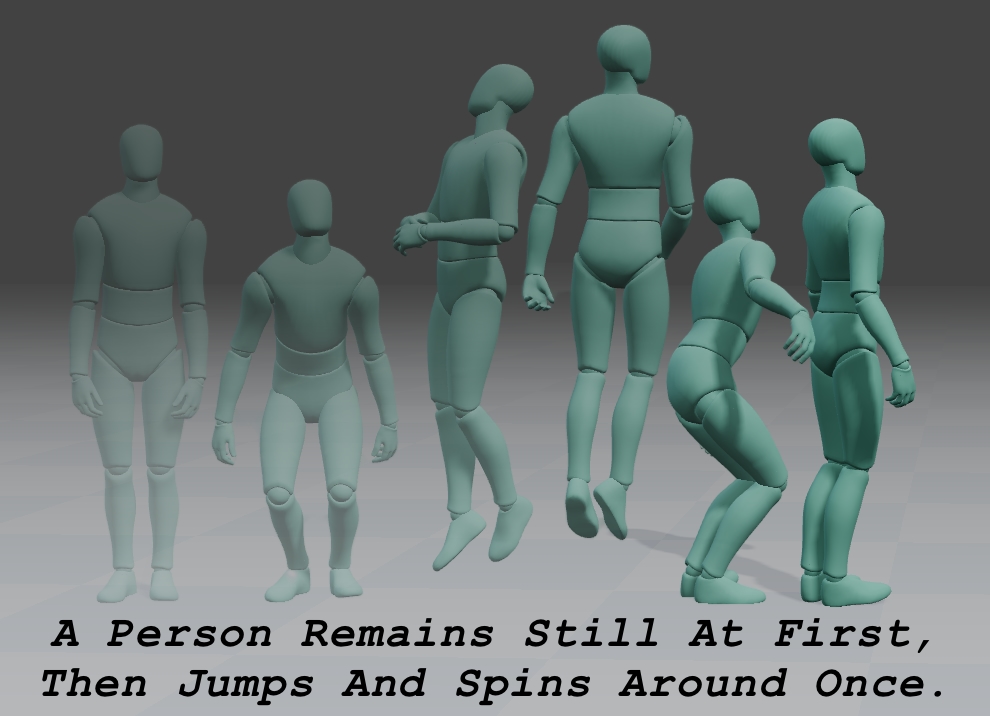}
  \end{minipage}
  \hfill
  \begin{minipage}{0.65\textwidth}
    \centering
    \includegraphics[width=\linewidth]{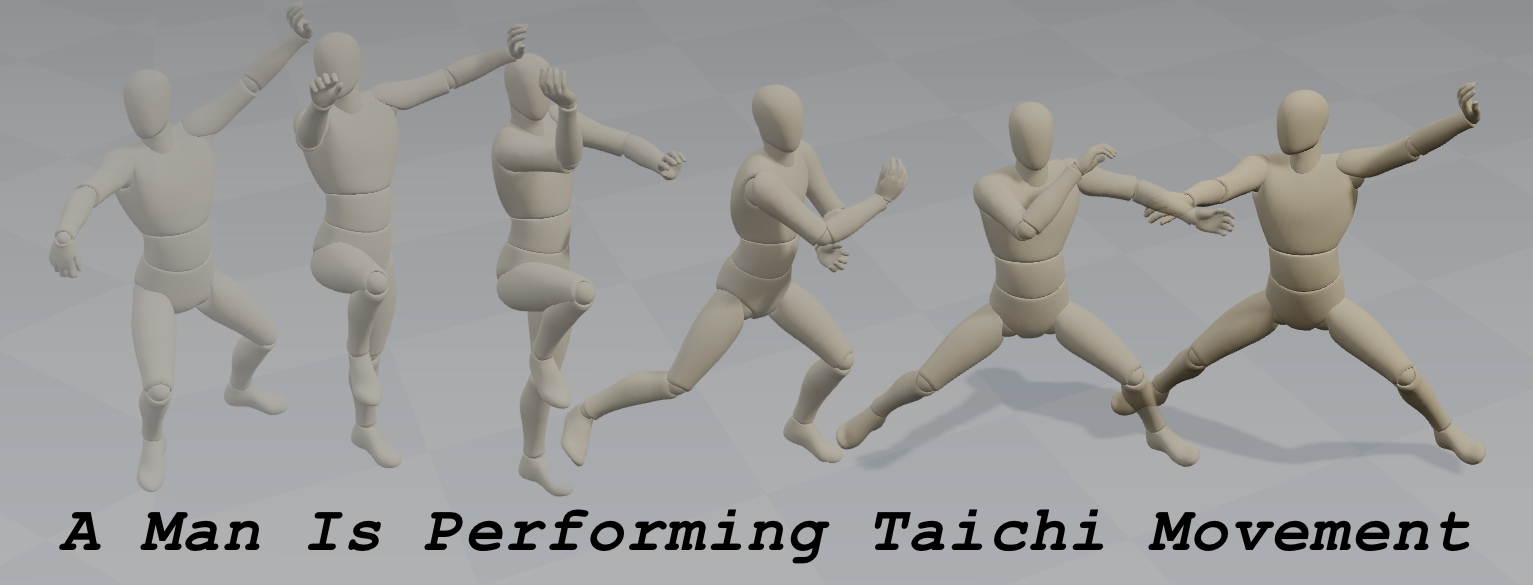}
  \end{minipage}

  \vspace{0.5em}

  \begin{minipage}{0.99\textwidth}
    \centering
    \includegraphics[width=\linewidth]{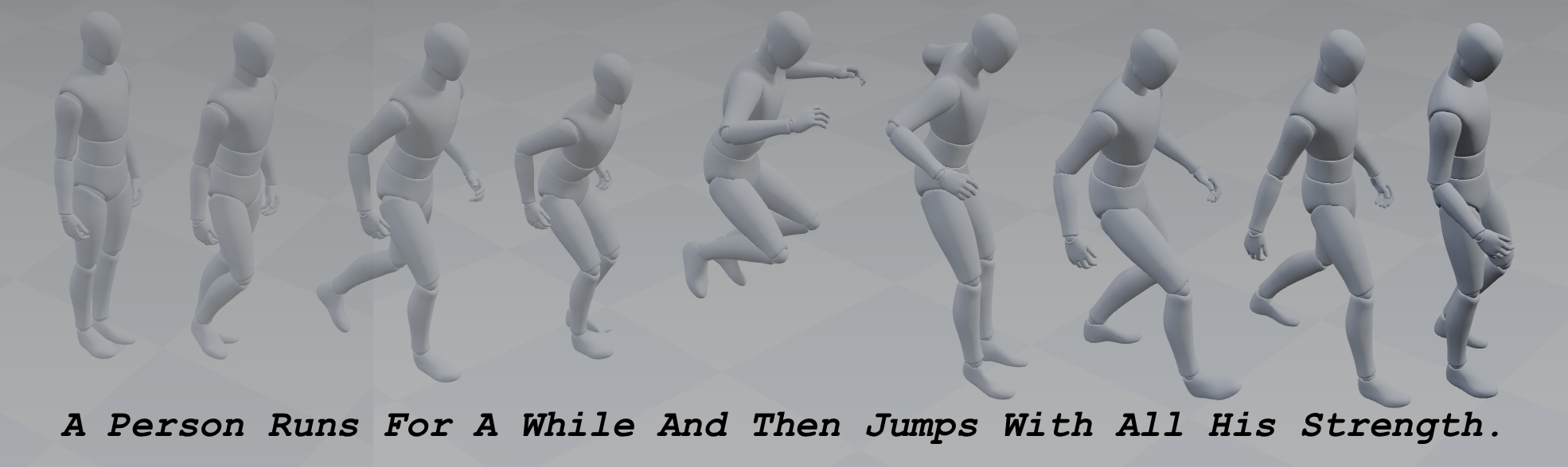}
  \end{minipage}
  \caption{Text-to-motion samples under our \textbf{Riemannian Motion Generation} framework.}
  \label{fig:teaser}
\end{figure}

\begin{abstract}
  Human motion generation is often learned in Euclidean spaces, although valid motions follow structured non-Euclidean geometry. We present \textbf{Riemannian Motion Generation (RMG)}, a unified framework that represents motion on a product manifold and learns dynamics via Riemannian flow matching. RMG factorizes motion into several manifold factors, yielding a scale-free representation with intrinsic normalization, and uses geodesic interpolation, tangent-space supervision, and manifold-preserving ODE integration for training and sampling. On HumanML3D, RMG achieves state-of-the-art FID in the HumanML3D format (0.043) and ranks first on all reported metrics under the MotionStreamer format. On MotionMillion, it also surpasses strong baselines (FID 5.6, R@1 0.86).
Ablations show that the compact $\mathscr{T}+\mathscr{R}$ (translation + rotations) representation is the most stable and effective, highlighting geometry-aware modeling as a practical and scalable route to high-fidelity motion generation.

\end{abstract}

\section{Introduction}

Conditional human motion generation has emerged as a key challenge in generative modeling, incorporating conditioning signals that range from text descriptions and action labels to audio, music, and scene context \citep{motion_survey}. Synthesizing high-fidelity motion sequences is essential for advancements in human-computer interaction, embodied AI, and augmented-reality content creation.

Recent progress in human motion generation has primarily focused on model architecture. While earlier methods relied on VAEs (e.g., TEMOS \citep{petrovich2022temos} and T2M \citep{humanml3d}), state-of-the-art systems increasingly adopt diffusion or autoregressive frameworks \citep{motion_diffuse,mdm,motionstreamer,motionlab,kim2023flame,momask,t2m_gpt,mgpt}. By contrast, the geometry of motion representation has received less systematic attention, despite its direct impact on optimization difficulty, sampling stability, and physical plausibility.

Most existing pipelines still encode motion using redundant Euclidean coordinates, enforcing validity only implicitly through constraints or post-processing. For a skeleton with
$J$ joints, an articulated pose has intrinsic degrees of freedom on the order of
$3J;$ however, common encodings concatenate multiple correlated views of the same state, occupying a much higher-dimensional ambient space. \Cref{tab:format_comparison} summarizes these encoding strategies across previous methods. Consequently, models are trained in
$\mathbb{R}^D,$ whereas physically valid motions reside on, or near, a lower-dimensional manifold with intrinsic dimension $d.$

This observation suggests a simple but important motivation: if data already resides on a lower-dimensional manifold and follows a strong geometric structure, the generative models should benefit from operating in that space and respect that structure rather than in an unconstrained ambient vector space.
Our key insight is that much of the apparent complexity of human motion does not come from arbitrary high-dimensional variation, but from composing several low-dimensional factors, each with its own natural geometry.
Once this factorization is made explicit, geometric consistency no longer needs to be imposed only after generation.
Instead, it can be built directly into both the representation and the generative dynamics.
This viewpoint naturally leads to a geometry-aware formulation in which representation and model design are developed together, rather than treated as separate concerns.

Guided by this perspective, we first provide a unified geometric view that decomposes existing representations into common factors and their natural manifolds. Building on this view, we introduce \textbf{Riemannian Motion Generation (RMG)}, a representation-and-generation framework that models motion on a product manifold and learns dynamics via Riemannian flow matching.
RMG yields a compact manifold-aware parameterization and a geometry-consistent training/inference pipeline.
On HumanML3D text-to-motion benchmarks, it matches or exceeds strong baselines across quality, alignment, and diversity metrics.
Moreover, RMG surpasses all baselines on the large-scale MotionMillion dataset, demonstrating superior scalability and generalization.

We summarize our core contributions as follows:

\begin{itemize}
  \item We propose \textbf{Riemannian Motion Generation (RMG)}, a geometric paradigm for human motion generation that models motion on product manifolds and learns dynamics via Riemannian flow matching.
  \item We design a compact Riemannian representation that is both effective and efficient, and we provide a systematic evaluation of representation geometry in the context of human motion generation.
  \item To the best of our knowledge, we present the first demonstration that Riemannian flow matching scales effectively to large datasets and modern high-capacity generative architectures.
  \item We demonstrate strong empirical results on text-to-motion benchmarks, showing consistent gains across quality, alignment, and diversity metrics.
\end{itemize}

\begin{table}[htbp]
  \vspace{-1.0\baselineskip}
  \centering
  \footnotesize
  \caption{Comparison of motion representations. $\mathscr{T}$: global translation; $\mathscr{R}_{\mathrm{Orientation}}$: global orientation; $\mathscr{R}_{\mathrm{Joint}}$: per-joint rotation; $\mathscr{P}$: joint position; $\d\cdot$: temporal difference of the variable. Compared to existing formats, our method adopts the most concise format, with each factor represented on its natural manifold.\label{tab:format_comparison}}
  \begin{tblr}{
      width=\linewidth,
      colspec={Q[l] *{10}{Q[c]}},
      colsep=3pt,
      rowsep=2pt,
      hline{1}={0.08em},
      hline{3}={0.05em},
      hline{Z}={0.08em},
      cells={valign=m},
    }
    \SetCell[r=2]{c,m} \textbf{Format} & \SetCell[r=2]{c,m} $\mathscr{T}$ & \SetCell[r=2]{c,m} $\mathscr{R}_{\mathrm{Orientation}}$ & \SetCell[r=2]{c,m} $\mathscr{R}_{\mathrm{Joint}}$ & \SetCell[r=2]{c,m} $\mathscr{P}$ & \SetCell[r=2]{c,m} $\d \mathscr{T}$ & \SetCell[r=2]{c,m} $\d \mathscr{R}_{\mathrm{Orientation}}$ & \SetCell[r=2]{c,m} $\d \mathscr{R}_{\mathrm{Joint}}$ & \SetCell[r=2]{c,m} $\d \mathscr{P}$ & Foot & Ambient \\
     &  &  &  &  &  &  &  &  & Contact & Dimension \\
    HumanML3D & \ding{55} & \ding{55} & \ding{51} & \ding{51} &\ding{51} & \ding{51} &\ding{55}& \ding{51} & \ding{51} & $12J-1$ \\
    MotionStreamer& \ding{55} & \ding{55} & \ding{51} & \ding{51} &\ding{51} & \ding{51}&\ding{55} & \ding{51} & \ding{55} & $12J+8$ \\
    DART& \ding{51} & \ding{51} & \ding{51} & \ding{51} & \ding{51} &\ding{51}&\ding{55} & \ding{51} & \ding{55} & $12J+12$ \\
    HY-Motion& \ding{51} & \ding{51} & \ding{51} & \ding{51} &\ding{55} & \ding{55}&\ding{55} & \ding{55} & \ding{55} & $9J+3$ \\
    RMG (Ours)& \ding{51} & \ding{51} & \ding{51} & \ding{55} &\ding{55} & \ding{55}&\ding{55} & \ding{55} & \ding{55} & $4J+3$ \\
  \end{tblr}
  \vspace{-1.0\baselineskip}
\end{table}

\section{Related Works}
\subsection{Human Motion Generation}

Human motion generation has witnessed rapid progress, propelled by improvements in deep learning and motion capture technologies.
Early approaches primarily adopted regression models to map predefined action labels to motion sequences, exemplified by works such as Action2Motion\citep{guo2020action2motion} and SA-GAN \citep{yu2020structure}.
The introduction of advanced generative models—including Variational Autoencoders \citep{petrovich2021action,kingma2013auto}, Generative Adversarial Networks \citep{degardin2022generative,goodfellow2014generative}, normalizing flows \citep{rezende2015variational,valle2021transflower}, and more recently, diffusion models \citep{mdm, petrovich2021action, mld,tseng2023edge,kim2023flame,mofusion} has enabled the modeling of complex, multi-modal motion distributions and the synthesis of more realistic and diverse human motions.

A significant trend in recent years is the use of conditional generative models, where motion is synthesized based on various forms of contextual signals \citep{petrovich2022temos,kim2022brand,mdm,guo2020action2motion,petrovich2022temos,t2m_gpt,li2021ai,ao2022rhythmic}.
These methods have evolved from simple mappings to architectures that exploit joint embeddings, transformers\citep{petrovich2022temos,guo2020action2motion}, and diffusion processes \citep{mdm,tseng2023edge,mofusion,mld}, resulting in higher fidelity and controllable motion sequences.

Despite these advances, human motion generation remains challenging due to the highly articulated and nonlinear nature of human movement, as well as the need for semantic alignment with conditioning signals.
Evaluation protocols are still evolving, with a combination of objective metrics and user studies commonly employed to assess the naturalness, diversity, and consistency of generated motions \citep{humanml3d,mdm, petrovich2021action,chen2021choreomaster,huang2020dance,liu2022disco}.

\subsection{Riemannian Manifold}

A smooth manifold $\mathcal{M}$ \citep{lee_smooth_manifolds} is a topological space that locally resembles Euclidean space $\mathbb{R}^n$, which allows for the application of calculus, and it becomes a \emph{Riemannian manifold} $(\mathcal{M}, g)$ when it is endowed with a Riemannian metric $g$ \citep{lee_riemannian_manifolds}.
The metric is a smoothly varying inner product on each tangent space, defining the inner product between any two tangent vectors $u, v \in T_p\mathcal{M}$ as $g_p(u, v)$.
This structure allows for the measurement of geometric properties, such as the length of a vector, and the angle between vectors.
The metric also induces a distance function on the manifold, typically the geodesic distance, which measures the length of the shortest path between two points \citep{lee_riemannian_manifolds}.

Furthermore, the Riemannian metric defines a canonical volume form $\d V$, which is essential for integration and defining probability distributions on the manifold \citep{lee_riemannian_manifolds}.
A probability density function $f: \mathcal{M} \rightarrow \mathbb{R}^+$ must satisfy the normalization condition $\int_{\mathcal{M}} f(p) \d V(p) = 1$.
This has enabled the development of statistical models on non-Euclidean domains, such as the Riemannian uniform/normal distribution, which are crucial for modern data analysis as well as the generative modeling on the Riemannian manifold \citep{pennec2006,said2017}.


\subsection{General Flow Matching}
Flow matching \citep{flow_matching, stochastic_interpolant, reflow}, combining aspects from Continuous Normalizing Flows and Diffusion Models, learns a time-dependent velocity field that transports a source distribution $p_0$ to a target distribution $p_1$. A standard training objective is the conditional flow-matching loss:
\begin{equation}\label{eq:cfm}
  \mathcal{L}_{\mathrm{CFM}}(\theta)=\mathbb{E}_{t,\bm x_1,\bm x_t} \left\|v_t (\bm x_t|\bm x_1)-v_{\theta}(\bm x_t,t)\right\|_2^2
\end{equation}
where $v_t(\bm x_t \mid \bm x_1)$ denotes the target conditional velocity.
In Euclidean space, this target is typically induced by the linear interpolation between $\bm x_0$ and $\bm x_1$, which yields a simple closed-form velocity field.
Recent work extends the same principle to Riemannian manifolds \citep{riemann_flow_matching,meta_fm}, replacing Euclidean linear paths with geodesics and defining target velocities in the corresponding tangent spaces.
Consistently, the geodesic degrades to the linear interpolation when the manifold is the Euclidean space, which means that the Riemannian flow matching is a strict generalization of the Euclidean case.
These formulations show that flow matching is not restricted to flat Euclidean domains and provide the foundation for geometry-aware generative modeling.

\section{Methodology\label{sec:methodology}}

In this section, we present our proposed method, RMG.
\paragraph{Notation.} Unless specified otherwise, $\mathcal{M}$ denotes a Riemannian manifold.
For $\bm{x}\in\mathcal{M}$, $T_{\bm{x}}\mathcal{M}$ denotes the tangent space at $\bm{x}$, and $T\mathcal{M}$ denotes the tangent bundle.
We write $\mathrm{Exp}_{\bm{x}}: T_{\bm{x}}\mathcal{M}\rightarrow \mathcal{M}$ and $\mathrm{Log}_{\bm{x}}:\mathcal{M}\rightarrow T_{\bm{x}}\mathcal{M}$ for the exponential and logarithm maps, respectively.
For embedded manifolds, $\Pi_{T_{\bm{x}}\mathcal{M}}(\cdot)$ denotes the projection operator onto $T_{\bm{x}}\mathcal{M}$.

\subsection{Motion Representation and Manifold \label{sec:preliminary}}

Prior work typically factorizes a single motion frame into several parts as illustrated in \Cref{tab:format_comparison}. We adopt this decomposition but cast each factor on its natural Riemannian manifold, yielding a scale-free representation with intrinsic normalization. Unlike previous works that still apply dataset-level mean/standard-deviation normalization after forming the representation, the manifold structure (unit quaternions, pre-shapes, and a canonical length for translation) makes normalization unnecessary and enables geometry-aware modeling.

\paragraph{Global Translation ($\mathscr{T}$).}
Following common practice, we choose a specific joint of human body (usually chosen as the root joint, pelvis) to represent the global translation, which is a simple Euclidean space of $\mathbb{R}^3$.
This factor captures the global trajectory of the motion and is essential for modeling locomotion and spatial movement.

\paragraph{Global Orientation and Per-Joint Rotations ($\mathscr{R}$).}
The articulated rotations can be represented by unit quaternions. For a skeleton with $J$ joints, we write $q=\{q_j\}_{j=1}^{J}$ with $q_j\in\mathbb{H}$ and $\|q_j\|_2=1$.
Inspired by the SMPL parameterization \citep{SMPL:2015}, we define a canonical reference pose (typically the T-pose) and express all rotations relative to it: $q_1$ encodes the global orientation \((\mathscr{R}_{\mathrm{Orientation}})\), while $\{q_j\}_{j=2}^{J}$ capture the local joint rotations in their respective local coordinate systems \((\mathscr{R}_\mathrm{Joint})\). Since each $q_j$ is a unit quaternion, it lies on the hypersphere $\mathbb{S}^3$ (embedded in $\mathbb{R}^4$), and the rotation component lies on the product manifold $(\mathbb{S}^3)^J$.

Unlike continuous 6D rotations \citep{cont6d}, unit quaternions represent $\mathrm{SO}(3)$ without redundancy and induce smooth geodesics on $\mathbb{S}^3$. This improves interpolation and sampling stability, avoids re-orthogonalization, and reduces dimensionality from 6 to 4 with a consistent distance metric.

\paragraph{Local Pose ($\mathscr{P}$).}
We represent the within-frame skeletal configuration as a point in the Kendall pre-shape space\citep{kendall1984_shape_manifolds, shape_theory, dryden2016shape}.
Treating the $J$ joints as landmarks in $\mathbb{R}^3$, the pre-shape is invariant to global translation and scale and thus captures only the relative joint configuration within each frame. Concretely, $\mathcal{S}^J_3$ denotes the set of centered configurations with unit Frobenius norm.

Given joint coordinates $P\in\mathbb{R}^{J\times 3}$, we remove global translation by centering and remove scale by Frobenius normalization:
\begin{equation}
  p=\frac{P - \bar{P}}{\|P - \bar{P}\|_F}\in\mathcal{S}^J_3,\quad \bar{P} = \frac{1}{J}\sum_{j=1}^J P_j.
  \label{eq:preshape_formulation}
\end{equation}
Unlike variants that only subtract the root (or XZ-plane) translation, this pre-shape is fully translation-invariant and scale-free, making it well-suited for modeling the local pose factor.

\paragraph{Temporal Differentiation ($\d \cdot$).}
For each factor, we can also include its temporal difference (e.g., velocity) as an additional component. Specifically, we can compute the temporal difference by $\d \bm x_t = \mathrm{Log}_{\bm x_t}(\bm x_{t+1}) \in T_{\bm x_t}\mathcal{M}_{\mathscr{F}}$ for $\mathscr{F}\in\{\mathscr{T},\mathscr{R},\mathscr{P}\}$. This captures the dynamic aspect of motion.

\paragraph{Unified View.}
\Cref{fig:illustration} illustrates our formulation, which provides us a unified view of motion representation as a product manifold of the composite factors and covers all the representation in the previous works.
Take the HumanML3D \citep{humanml3d} format without the foot concact indicator as an example, it corresponds to a product manifold of the form:
\[\mathcal{M}_{\mathrm{H3D}} = T \mathcal{M}_{\mathscr{T}} \times T \mathcal{M}_{\mathscr{R}_{\mathrm{Orientation}}} \times \mathcal{M}_{\mathscr{R}_{\mathrm{Joint}}} \times \mathcal{M}_{\mathscr{P}} \times   T \mathcal{M}_{\mathscr{P}}\]

However, using so many factors is redundant and not necessary.
In this work, we argue global translation ($\mathscr{T}$), global orientation, and joint rotations ($\mathscr{R}$) are sufficient to capture articulated motion, which are verified through empirical studies (\Cref{sec:experiments}) and theoretical analysis (\Cref{sec:theory}).
We therefore adopt a more compact representation that omits the pre-shape and other temporal differences, yielding
\begin{equation}
  \mathcal{M}_{\mathrm{RMG}} = \mathcal{M}_{\mathscr{T}} \times \mathcal{M}_{\mathscr{R}} = \mathbb{R}^3\times (\mathbb{S}^3)^J,
  \label{eq:rmg_manifold}
\end{equation}

We will elaborate and justify this choice in the ablation study (\Cref{sec:ablation}).

\begin{figure}[tbp]
  \centering
  \includegraphics[width=0.9\textwidth, trim=0 50 270 0, clip]{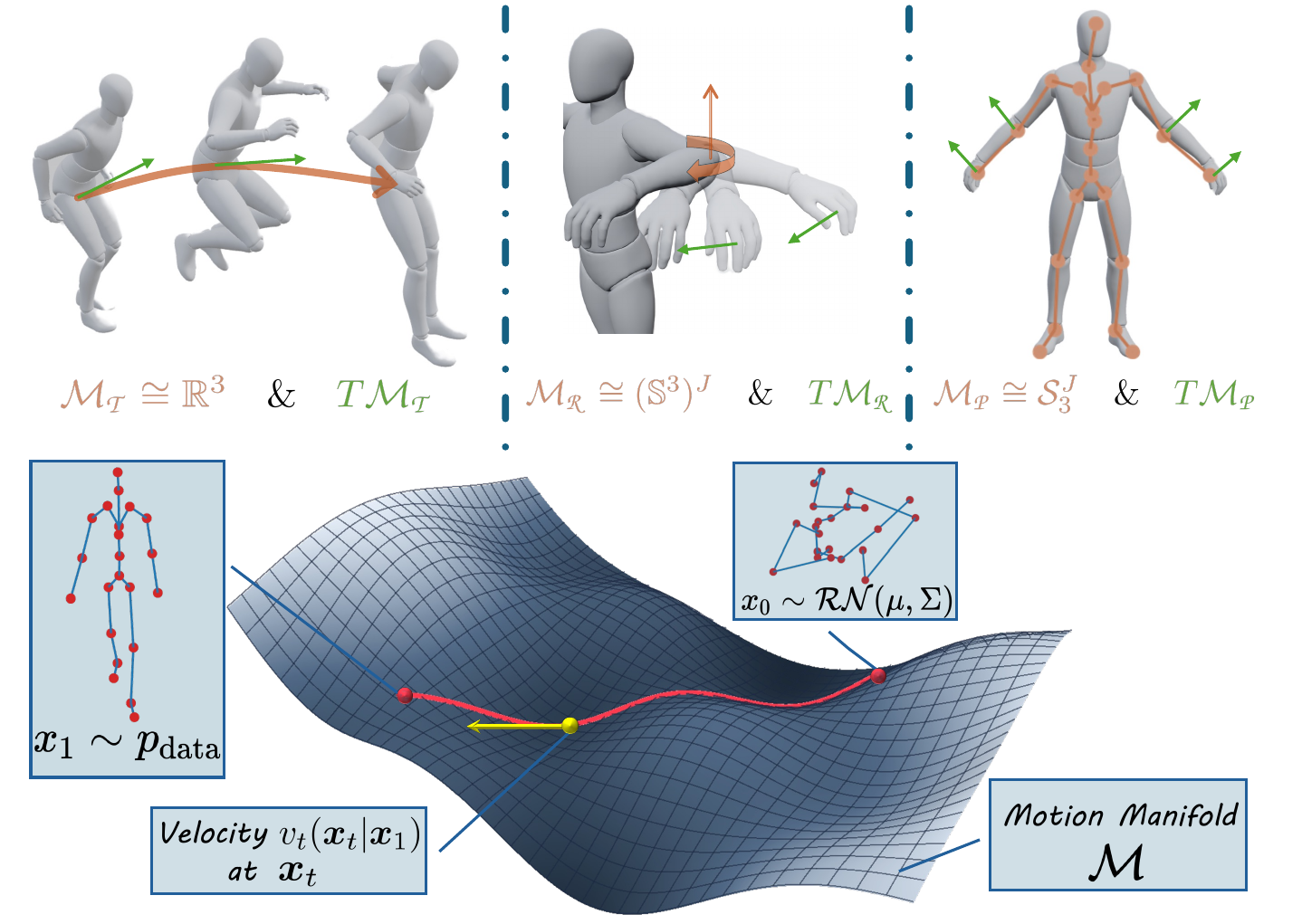}
  \caption{(\textbf{Top}) Illustration of the unified Riemannian representation for articulated motion. Each motion frame can be factorized into \textcolor[RGB]{205,144,114}{\textbf{global translation} $(\mathcal{M}_{\mathscr{T}})$}, \textcolor[RGB]{205,144,114}{\textbf{global orientation and per-joint rotations} $(\mathcal{M}_{\mathscr{R}})$}, and \textcolor[RGB]{205,144,114}{\textbf{local pose} $(\mathcal{M}_{\mathscr{P}})$} along with the \textcolor[RGB]{101,165,66}{\textbf{temporal differences} $(T\mathcal{M}_{\mathscr{F}}\ \text{for}\ \mathscr{F}\in\{\mathscr{T},\mathscr{R},\mathscr{P}\}$)}. Each factor is represented on its natural Riemannian manifold, yielding a scale-free representation with intrinsic normalization. (\textbf{Bottom}) Illustration of the Riemannian flow matching process in the RMG manifold. \(\mathcal{M}\) is defined by our proposed manifold \Cref{eq:rmg_manifold}. The red line is the geodesic between \(\bm x_0\) and \(\bm x_1\) while the yellow line with arrow is the velocity at \(\bm x_t\).\label{fig:illustration}}
\end{figure}

\subsection{Prior Distribution}
\label{sec:prior}

\paragraph{Riemannian Gaussian Distribution.}
We first introduce a mean-centered wrapped Gaussian distribution on a general Riemannian manifold $\mathcal{M}$. Given a reference (mean) point $\mu\in\mathcal{M}$, we draw Gaussian noise in its embedded Euclidean space $\mathbb{R}^n$, map it to the tangent space $T_{\mu}\mathcal{M}$ (via a projection operator), and then ``wrap'' it onto the manifold using the exponential map:
\[\xi \sim \mathcal{N}(0,\Sigma)\in \mathbb{R}^n,\qquad v= \Pi_{T_{\mu}\mathcal{M}}(\xi) \in T_{\mu}\mathcal{M}, \qquad z = \mathrm{Exp}_{\mu}\big(v\big)\in\mathcal{M},\]
where $\Sigma$ is typically chosen block-diagonal. Such distribution can be denoted as \(\mathcal{RN}(\mu, \Sigma)\).

\paragraph{Choice of Reference Point.}
The reference point $\mu$ can be chosen arbitrarily, but a good choice improves sampling quality.
For motion generation, we set $\mu$ to the \emph{rest pose} with zero translation and identity rotations, which is a natural center of the motion manifold.
This choice ensures that samples from the prior correspond to plausible static poses, providing a meaningful starting point for motion synthesis.

Specifically, for the translation factor, we set $\mu_{\mathscr{T}}=\bm 0 \in\mathbb{R}^3$; for the rotation factor, we set $\mu_{\mathscr{R}}=[1,0,0,0]\in \mathbb{S}^3$; for the pre-shape factor (if used), we set $\mu_{\mathscr{P}}$ to the canonical T-pose after removing global translation and normalizing using \Cref{eq:preshape_formulation}.
In our chosen manifold $\mathcal{M}_{\mathrm{RMG}}$ (\Cref{eq:rmg_manifold}), the reference point is thus
\[\mu=(\mu_{\mathscr{T}}, \underbrace{\mu_{\mathscr{R}}, \mu_{\mathscr{R}}, \dots, \mu_{\mathscr{R}}}_{J \text{ times}})\in \mathcal{M}_{\mathrm{RMG}}.\]

\subsection{Training and Inference\label{sec:train}}
We train a time-dependent vector field on the motion manifold $\mathcal{M}$ with Riemannian flow matching.
Let $\bm{x}_1\sim p_{\mathrm{data}}$ denote a real motion sample and $\bm{x}_0\sim p_0$ a prior sample (\Cref{sec:prior}).
For $t\sim\mathcal{U}[0,1]$, we construct the interpolation state on the geodesic from $\bm{x}_0$ to $\bm{x}_1$:
\begin{equation*}
  \bm{x}_t \;=\; \mathrm{Exp}_{\bm{x}_0}\!\big(t\,\mathrm{Log}_{\bm{x}_0}(\bm{x}_1)\big),
  \label{eq:geodesic_interpolation}
\end{equation*}
On product manifolds, $\mathrm{Exp}$ and $\mathrm{Log}$ are applied factor-wise.
The translation factor is Euclidean, while rotation and (optional) pre-shape factor use the corresponding manifold maps.
The supervision signal is the geodesic tangent at $\bm{x}_t$, written as
\begin{equation*}
  v_t(\bm{x}_t\,|\,\bm{x}_1) \;=\; \frac{1}{1-t}\,\mathrm{Log}_{\bm{x}_t}(\bm{x}_1)\;\in\;T_{\bm{x}_t}\mathcal{M},
  \label{eq:target_velocity}
\end{equation*}
which reduces to the standard Euclidean flow-matching target when $\mathcal{M}=\mathbb{R}^n$.

We parameterize the vector field by a neural network $v_{\theta}(\bm{x}_t,t)$. Since the output of the neural network is in the ambient Euclidean space, we must project it to the tangent space to enforce valid manifold dynamics:
$\Pi_{T_{\bm{x}_t}\mathcal{M}}v_{\theta}(\bm{x}_t,t)\in T_{\bm{x}_t}\mathcal{M}$.

Training minimizes the mean-squared error between target and predicted tangent velocities:
\begin{equation}
  \mathcal{L}(\theta)
  \;=\;
  \mathbb{E}_{\bm{x}_1\sim p_{\mathrm{data}},\,\bm{x}_0\sim p_0,\,t\sim \mathcal{U}[0,1]}
  \Big[
    \big\|v_t(\bm{x}_t\,|\,\bm{x}_1) - \Pi_{T_{\bm{x}_t}\mathcal{M}} v_{\theta}(\bm{x}_t,t)\big\|_2^2
  \Big].
  \label{eq:training_loss}
\end{equation}

At inference time, we sample $\bm{x}_0\sim p_0$ and integrate the learned manifold ODE:
\begin{equation}
  \frac{\mathrm{d}\bm{x}_t}{\mathrm{d}t} \;=\; \Pi_{T_{\bm{x}_t}\mathcal{M}} v_{\theta}(\bm{x}_t,t),\qquad \bm{x}_0\sim p_0,
  \label{eq:inference_ode}
\end{equation}
from $t=0$ to $t=1$.
With step size $h$, a first-order Riemannian Euler update is
$\bm{x}_{t+h}=\mathrm{Exp}_{\bm{x}_t}\!\big(h\,\Pi_{T_{\bm{x}_t}\mathcal{M}}v_{\theta}(\bm{x}_t,t)\big)$,
which preserves manifold constraints by construction.

We emphasize that \(T\mathcal{M}_{\mathscr{F}}\), with \(\mathscr{F}\in\{\mathscr{T},\mathscr{R},\mathscr{P}\}\), in \Cref{sec:preliminary} and \(T_{\bm{x}_t}\mathcal{M}\) in \Cref{sec:train} refer to different objects despite the similar notation. The former appears in the factorized motion representation and denotes optional temporal-difference components attached to data samples, which therefore belong to the data distribution \(p_{\mathrm{data}}=p_1\). The latter is the tangent space at the interpolation state \(\bm{x}_t\) and contains the time-dependent velocity field used by Riemannian flow matching, corresponding to the intermediate probability flow \(p_t\).

\section{Experiments}
\label{sec:experiments}

\subsection{Experiment Setup}

\paragraph{Datasets.} The experiments are mainly conducted on \emph{HumanML3D} \citep{humanml3d}.
HumanML3D is a large-scale language-motion dataset comprising 14,616 motions and 44,970 text descriptions building upon the AMASS dataset \citep{AMASS:ICCV:2019}.
Besides HumanML3D, we also employ \emph{MotionMillion} \citep{motionmillion}, which is a recently released large-scale motion dataset with 1 million motion clips and 4 million text descriptions, to pre-train our model and evaluate its generalization ability.

\paragraph{Evaluation Metrics.} Following previous works \citep{humanml3d,mld,mdm}, we employ 4 main metrics to evaluate our framework. The Frechet Inception Distance is used to measure the motion quality as well as the feature distributions.
The Diversity and MultiModality Distance are incorporated to measure the generation diversity. Lastly, the R Precision aims to evaluate the matching rate of the conditions.

\paragraph{Implementation Details.} We use the Diffusion Transformer \citep{dit} as the model backbone for the flow matching.
For text encoding, we incorporate Qwen \citep{qwen3embedding,qwen3}, utilizing the encoded hidden states as the text representation.
During training, we employ the AdamW \citep{adamw} optimizer with a cosine learning-rate schedule and linear warmup: the learning rate is first warmed up linearly to $10^{-4}$ and then cosine-annealed to near zero.
Training is performed in a classifier-free manner \citep{classifier_free_guidance} with a dropout rate of $10\%$.
To stabilize both training and inference, we adopt an exponential moving average (EMA) strategy.

\subsection{Main Results}

\begin{table}[htbp]
  \vspace{-1.0\baselineskip}
  \centering
  \footnotesize
  \caption{Evaluation of \textit{text-based motion generation} on HumanML3D \citep{humanml3d} dataset. Reported metrics are FID, R@1, Diversity, and MModality. The models in bold are the optimal models, and the models in underline are the sub-optimal models. The guidance scale is set to $6.5$.\label{tab:h3d_results}}
  \begin{tblr}{
      width=\linewidth,
      colspec={Q[l] *{4}{Q[c]} *{4}{Q[c]}},
      row{1,2}={font=\bfseries},
      colsep=3pt,
      rowsep=1pt,
      hline{1}={0.08em},
      hline{2}={2-5}{0.05em, rightpos=2.5},
      hline{2}={6-9}{0.05em, leftpos=2.5},
      hline{4}={0.04em},
      hline{13}={0.04em},
      row{13}={bg=gray!15},
      hline{3}={0.05em},
      hline{Z}={0.08em}
    }
    \SetCell[r=2]{c,m} Method & \SetCell[c=4]{c} HumanML3D Format & & & & \SetCell[c=4]{c} MotionStreamer Format & & & \\
    & FID$\downarrow$ & R@1$\uparrow$ & Div$\rightarrow$ & MModality$\uparrow$ & FID$\downarrow$ & R@1$\uparrow$ & Div$\rightarrow$ & MModality$\uparrow$ \\
    GT & 0.002 & 0.511 & 9.503 & 2.799 & 0.002 & 0.702 & 27.492 & -- \\
    MLD \citeyearpar{mld} & 0.473 & 0.481 & 9.724 & 2.413 & 18.236 & 0.546 & 26.352 & -- \\
    T2M-GPT\citeyearpar{t2m_gpt} & 0.116 & 0.492 & 9.761 & 1.856 & 12.475 & 0.606 & 27.275 & -- \\
    MotionGPT \citeyearpar{mgpt} & 0.232 & 0.492 & \textbf{9.528} & 2.008 & 14.375 & 0.456 & 27.114 & -- \\
    MoMask  \citeyearpar{momask} & \underline{0.045} & 0.521 & -- & 1.241 & 12.232 & 0.621 & 27.127 & --\\
    MotionLCM \citeyearpar{motionlcm} & 0.304 & 0.505 & 9.607 & 2.259 & -- & -- & -- & -- \\
    MotionCLR \citeyearpar{motionclr} & 0.269 & \textbf{0.542} & 9.607 & 1.985 & -- & -- & -- & -- \\
    MotionLab \citeyearpar{motionlab} & 0.167 & -- & 9.593 & \textbf{2.912} & -- & -- & -- & -- \\
    MARDM \citeyearpar{mardm} & 0.114 & 0.500 & -- & 2.231 & -- & -- & -- & -- \\
    MotionStreamer \citeyearpar{motionstreamer} & -- & -- & -- & -- & \underline{11.790} & \underline{0.631} & \underline{27.284} & -- \\
    Ours & \textbf{0.043} &\underline{0.525} & \textbf{9.555} & \underline{2.748} & \textbf{5.835} & \textbf{0.710} & \textbf{27.672} & \textbf{14.906} \\
  \end{tblr}
\end{table}

We first evaluate our method on the HumanML3D text-to-motion generation benchmark and compare it with several strong baselines, including both diffusion-based methods (e.g., MotionDiffuse \citep{motion_diffuse}, MDM \citep{mdm}, MotionLab \citep{motionlab}) and autoregressive methods (e.g., T2M-GPT \citep{t2m_gpt}, MGPT \citep{mgpt}, MoMask \citep{momask}). We report results under two output formats: the standard HumanML3D format and the MotionStreamer format.
\emph{It is worth noting that we implement extra functions to convert our Riemannian representation to either the HumanML3D format or MotionStreamer format for fair comparison. Refer to supplementary materials \Cref{sec:conversion_functions} for details.}

\Cref{tab:h3d_results} reports quantitative results of the 4 metrics on HumanML3D under two formats, where we focus primarily on the standard HumanML3D format. In this setting, our method achieves the best FID (0.043), slightly surpassing the previous best MoMask (0.045), indicating stronger motion realism and distribution matching. At the same time, our model maintains strong text-motion consistency with R@1 = 0.525 (second only to MotionCLR at 0.542), while preserving high generation diversity and multimodality (Div = 9.555 and MModality = 2.748). Compared with prior methods that optimize only part of this trade-off (e.g., lower FID or higher R@1 alone), our model provides a more balanced improvement across quality, alignment, and diversity, which is critical for practical text-to-motion generation.
For completeness, under the MotionStreamer format our method ranks first among all reported metrics (FID = 5.835, R@1 = 0.710, Div = 27.672, MModality = 14.906), further supporting the robustness of the learned Riemannian representation across output formats.
We also provide additional results in the supplementary materials.

\begin{wraptable}{r}{0.48\textwidth}
  \vspace{-1.0\baselineskip}
  \centering
  \caption{Evaluation of \textit{text-based motion generation} on MotionMillion \citep{motionmillion} dataset. Reported metrics are FID and R@1. In our method, 0.5B and 1.7B refer to the parameter sizes of our Riemannian flow matching model and Qwen3-1.7B, respectively. G.S. refers to the guidance scale.\label{tab:mm_results}}
  \begin{tblr}{
      width=\linewidth,
      colspec={Q[l] Q[c] Q[c] Q[c]},
      row{1}={font=\bfseries},
      row{5}={bg=gray!15},
      row{6}={bg=gray!15},
      colsep=4pt,
      rowsep=2pt,
      hline{1}={0.08em},
      hline{2}={0.05em},
      hline{5}={0.04em},
      hline{Z}={0.08em},
    }
    Method & G.S. & FID$\downarrow$ & R@1$\uparrow$ \\
    ScaMo \citeyearpar{scamo} & -- & 89.0 & 0.67 \\
    MotionMillion-3B & -- & 10.8 & 0.79 \\
    MotionMillion-7B & -- & 10.3 & 0.79 \\
    Ours (0.5B+1.7B) & 2.0 & \textbf{5.6} & \underline{0.81} \\
    Ours (0.5B+1.7B) & 3.0 & \underline{7.8} & \textbf{0.86} \\
  \end{tblr}
\end{wraptable}

We further evaluate text-based motion generation on the large-scale MotionMillion benchmark to assess whether the advantages of our representation persist in a substantially broader data regime. \Cref{tab:mm_results} shows that our model (\emph{Ours, 0.5B+1.7B}) outperforms the MotionMillion baselines across both fidelity and text alignment, while also exhibiting a clear and favorable guidance trade-off. With guidance scale $2.0$, our method attains better FID of $5.6$, improving substantially over the strongest baseline MotionMillion-7B ($10.3$) and thus reducing the distribution gap by nearly half. Increasing the guidance scale to $3.0$ further raises R@1 from $0.81$ to $0.86$, while still maintaining a stronger FID than previous baselines.
These results are particularly notable because our approach remains superior to substantially when scaling up, indicating that the gains come not merely from model scaling, but from the effectiveness of the proposed Riemannian motion representation and flow-matching formulation.

Overall, these findings highlight the scalability and generalization ability of our \textbf{RMG} framework, which can be effectively applied to both small and large datasets and models while maintaining superior performance across key metrics.

\subsection{Ablation Studies\label{sec:ablation}}

In this section, we conduct ablation studies to analyze the impact of different factors in our framework, and we answer two main research questions:
\begin{enumerate}[label=\textbf{RQ\arabic*:}, ref=RQ\arabic*]
  \item \label{rq:1} Which factors matter for motion quality and guidance stability?
  \item \label{rq:2} Does temporal difference modeling improve motion quality?
\end{enumerate}

\begin{figure}[htbp]
  \centering
  \begin{subfigure}[t]{0.49\textwidth}
    \centering
    \includegraphics[width=\textwidth]{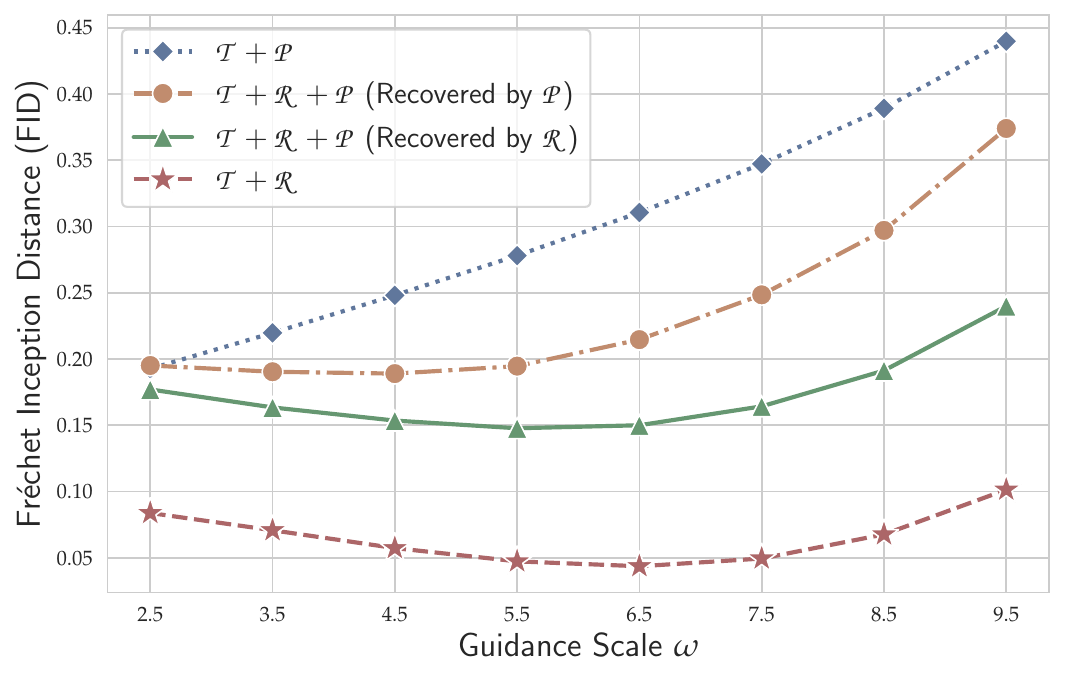}
    \caption{ We study the impact of the Riemannian representation including $\mathscr{T}+\mathscr{R}$, $\mathscr{T}+\mathscr{R}+\mathscr{P}$ and $\mathscr{T}+\mathscr{P}$.}
    \label{fig:ablation_v1}
  \end{subfigure}
  \hfill
  \begin{subfigure}[t]{0.49\textwidth}
    \centering
    \includegraphics[width=\textwidth]{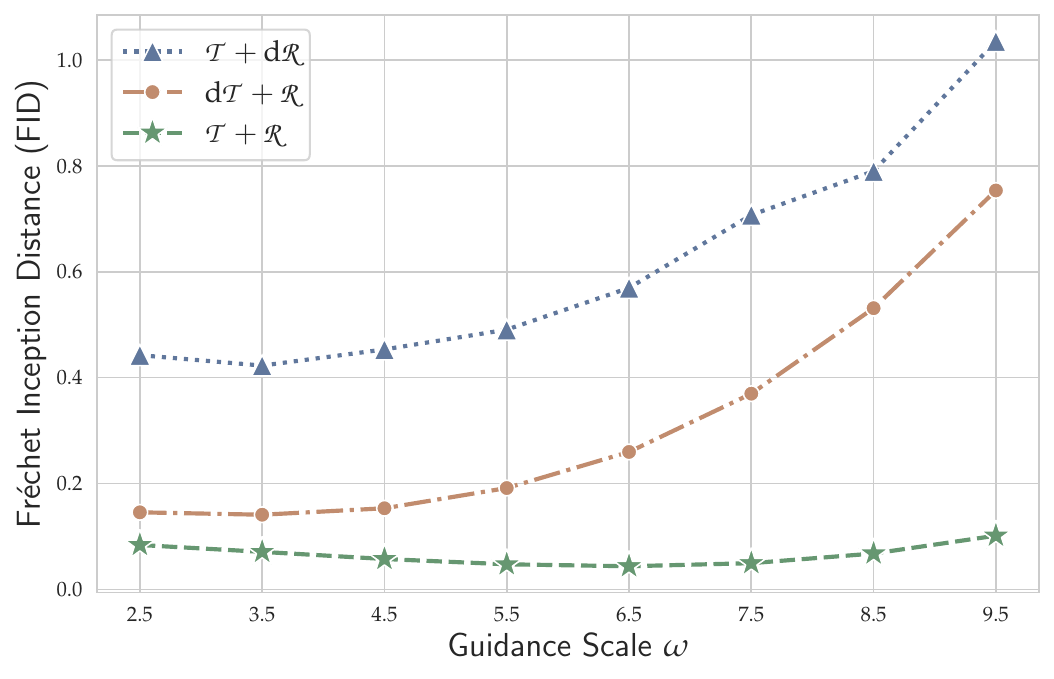}
    \caption{ We study the impact of temporal difference modeling in our framework including $\mathscr{T}+\mathscr{R},\ \d\mathscr{T}+\mathscr{R},\ \mathscr{T}+\d\mathscr{R}$. }
    \label{fig:ablation_v2}
  \end{subfigure}
  \caption{Ablation study on different factors of our framework. All the models are trained with the same setting (including the parameter size, random seed) and evaluated on the HumanML3D benchmark.\label{fig:ablation}}
  \vspace{-1.0\baselineskip}
\end{figure}

\paragraph{\ref{rq:1}.}To isolate the role of each factor, we keep the same architecture and training setup and only change the representation, then sweep the guidance scale $\omega \in [2.5, 9.5]$. \Cref{fig:ablation_v1} shows that $\mathscr{T}+\mathscr{R}$ is consistently the best and most stable setting: its FID decreases from approximately $0.084$ ($\omega=2.5$) to a minimum of about $0.043$ ($\omega=6.5$), and remains low even at large guidance ($0.101$ at $\omega=9.5$). In contrast, $\mathscr{T}+\mathscr{P}$ degrades monotonically as guidance increases (from $0.20$ to $0.44$), indicating poor robustness without rotation information.

For $\mathscr{T}+\mathscr{R}+\mathscr{P}$, the term `Recovered by \(\cdot\)' means different conversion functions used to convert to the HumanML3D format for evaluation. Refer to \Cref{sec:conversion_functions} for details. We find that recovering with $\mathscr{R}$ is consistently better than recovering with $\mathscr{P}$ across the entire guidance range (e.g., $0.043$ vs.\ $0.084$ at $\omega=2.5$, and $0.101$ vs.\ $0.44$ at $\omega=9.5$). This suggests that the rotation component is more critical for maintaining motion quality and stability under guidance, while the pose-coordinate component does not provide additional robustness in this setup.

This trend is also aligned with practical motion representations: in real animation and robotics pipelines, motion is predominantly driven by global/root translation and joint rotations. By contrast, pose-coordinate representations are more affected by subject-specific factors (e.g., body scale and limb proportions), which introduces additional variability across people. Taken together, these observations suggest that $\mathscr{T}+\mathscr{R}$ is a compact and robust representation that is sufficient for motion modeling in our setting.

\paragraph{\ref{rq:2}.}To answer whether temporal difference modeling improves motion quality, we compare $\mathscr{T}+\mathscr{R}$ with $\d\mathscr{T}+\mathscr{R}$ and $\mathscr{T}+\d\mathscr{R}$ under the same guidance sweep $\omega \in [2.5, 9.5]$. \Cref{fig:ablation_v2} shows that $\mathscr{T}+\mathscr{R}$ consistently attains the lowest FID across the full range of $\omega$. In contrast, $\d\mathscr{T}+\mathscr{R}$ degrades steadily as guidance increases ($0.14 \rightarrow 0.75$), and $\mathscr{T}+\d\mathscr{R}$ is less stable still ($0.44 \rightarrow 1.03$). This behavior is consistent with the representation property of temporal differencing: it attenuates absolute/global motion state (e.g., global trajectory and orientation) while emphasizing local frame-to-frame variation, which weakens long-horizon structure modeling. Overall, temporal differencing does not improve performance in our setting, and absolute modeling of translation and rotation (\mbox{$\mathscr{T}+\mathscr{R}$}) remains the most robust choice for guidance-stable generation.

\section{Conclusion}

\paragraph{Limitations.} Though our work has demonstrated promising results, there are certain limitations including lack of further exploration of more condition modalities and the potential for facial and hand motion modeling. We leave more discussion in the supplementary materials \Cref{sec:limitations}.

In conclusion, we proposed \textbf{Riemannian Motion Generation (RMG)}, a unified geometric framework for human motion representation and generation. Instead of treating motion as an unconstrained Euclidean vector, we modeled it on a product manifold and learned generation dynamics with Riemannian flow matching. This design yields a compact representation, $\mathbb{R}^3\times(\mathbb{S}^3)^J$, together with a geometry-consistent training and inference pipeline based on geodesic interpolation, tangent-space vector-field learning, and manifold-preserving integration.

Extensive experiments demonstrated that this geometric formulation is both effective and scalable. RMG achieves strong and balanced performance on HumanML3D across motion quality, text alignment, and diversity, and generalizes well to the larger MotionMillion benchmark. Our ablation studies further showed that translation and rotation factors are sufficient in practice, while adding extra factors does not necessarily improve generation quality. Overall, the results support representation geometry as a core design principle for motion generation. 

{\small
  \bibliographystyle{plainnat}
  \bibliography{main}
}

{\newpage
\appendix

\noindent
\textbf{\Large Appendix}

\section{Manifold}

In this paper, we focus on three types of Riemannian manifolds: Euclidean space $\mathbb{R}^3$, the hypersphere $\mathbb{S}^3$, and the pre-shape space \(\mathcal{S}^J_3\).

\paragraph{Euclidean space.} Euclidean space $\mathbb{R}^3$ is a flat manifold with zero curvature. It is the most familiar manifold, where the distance between two points is simply their straight-line distance. Most existing diffusion models are built in this space.

\paragraph{Hypersphere.} The hypersphere \(\mathbb{S}^3\) is a curved manifold with constant positive curvature. It is embedded in \(\mathbb{R}^4\) and consists of points at a fixed distance from the origin. In this work, we use the hypersphere to model quaternions.
Since \(q\) and \(-q\) represent the same rotation, we restrict quaternions to the upper hemisphere of \(\mathbb{S}^3\) to avoid ambiguity during dataset construction.
In other words, for any point \(q = (w, x, y, z) \in \mathbb{S}^3\), we have \(q_0 > 0\).

For Riemannian flow matching on a general hyper-sphere \(\mathcal{S}^d\), consider two points \(\bm x_0, \bm x_1 \in \mathbb{R}^{d+1}\) on the hypersphere \(\mathcal{S}^d\). The geodesic between them can be written as
\[\gamma(t) = \frac{\sin((1-t)\theta)}{\sin(\theta)}\bm x_0 + \frac{\sin(t\theta)}{\sin(\theta)}\bm x_1,\]
where \(\theta = \arccos(\langle \bm x_0, \bm x_1 \rangle)\) is the angle between \(\bm x_0\) and \(\bm x_1\). The geodesic velocity at time \(t\) is given by:
\[\dot{\gamma}(t) = \frac{\theta}{\sin(\theta)}\left(-\sin(t\theta)\bm x_0 + \sin((1-t)\theta)\bm x_1\right).\]

\paragraph{Pre-shape space.} The pre-shape space models the relative configuration of points. For example, the set of all triangles in \(\mathbb{R}^2\) forms a pre-shape space, denoted by \(\mathcal{S}^3_2\). In this work, we use the pre-shape space \(\mathcal{S}^J_3\) to model the human skeleton, which consists of \(J\) joints in \(\mathbb{R}^3\). A pre-shape is represented by a \(J \times 3\) matrix, where each row gives the coordinates of one joint. The pre-shape space is defined as the set of all such matrices that are centered at the centroid and normalized to unit Frobenius norm. A concrete formulation is given in \Cref{eq:preshape_formulation}. Intrinsically, the pre-shape space can be viewed as a high-dimensional hypersphere, so geodesics can be computed in the same way as on the hypersphere.

Beyond the pre-shape space, one can also consider the shape space, which is the quotient of the pre-shape space by the rotation group: \(\mathcal{S}^J_3 \backslash \text{SO}(3)\). However, geodesic computation in the shape space is substantially more involved, so we only study the pre-shape space in this work.

For Riemannian flow matching on a general pre-shape space \(\mathcal{S}_m^k\), for any two pre-shapes $\bm X_0, \bm X_1\in\mathcal{S}_m^k$, define
\[
  \theta = \arccos \text{trace}(\bm X_0^\top \bm X_1), \qquad 0 \le \theta \le \pi .
\]

If $\bm X_0 \neq \pm \bm X_1$, the unique minimizing geodesic connecting $\bm X_0$ and $\bm X_1$ on $\mathcal{S}_m^k$ is given by
\[
  \Gamma(t)
  =
  \frac{\sin((1-t)\theta)}{\sin\theta}\,\bm X_0
  +
  \frac{\sin(t\theta)}{\sin\theta}\,\bm X_1,
  \qquad t \in [0,1].
\]
and the geodesic velocity at time \(t\) is given by
\[
  \dot{\Gamma}(t)
  =
  \frac{\theta}{\sin\theta}\left(-\sin(t\theta)\bm X_0 + \sin((1-t)\theta)\bm X_1\right).
\]

\section{Theoretical Support}
\label{sec:theory}

\subsection{Redundancy induces flat directions}

\begin{proposition}[Redundancy induces flat directions]
  Let $\mathcal{M}$ be a smooth $m$-dimensional manifold and let $\pi: U \subset \mathbb{R}^d \to \mathcal{M}$ be a smooth submersion with $d > m$.
  For any loss $L(z) = \ell(\pi(z))$, the gradient $\nabla L(z)$ is orthogonal to $\ker(D\pi(z))$, and the Hessian $\nabla^2 L(z)$ has at least $d - m$ zero eigenvalues at any critical point.
  In other words, representing an $m$-dimensional quantity in $\mathbb{R}^d$ introduces $d-m$ flat directions in the loss landscape.
\end{proposition}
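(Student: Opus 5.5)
The plan is to work in local coordinates and use the chain rule twice. Fix $z\in U$ and write $K=\ker(D\pi(z))$. Since $\pi$ is a submersion, $D\pi(z):\mathbb{R}^d\to T_{\pi(z)}\mathcal{M}$ is surjective, so rank--nullity gives $\dim K = d-m$.

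\emph{Gradient.} By the chain rule, $DL(z)[u] = D\ell(\pi(z))[D\pi(z)u]$ for all $u\in\mathbb{R}^d$. For $u\in K$ this is zero, so $\langle \nabla L(z),u\rangle = 0$. Hence $\nabla L(z)\in K^{\perp}$, i.e.\ $\nabla L(z) = D\pi(z)^{\top}\nabla\ell$ in local coordinates on $\mathcal{M}$.

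\emph{Hessian.} This step needs the most care, because $\mathcal{M}$ is only a manifold and $\ell$ has no canonical Hessian away from critical points. I will work in a chart $\phi$ around $\pi(z)$, set $\tilde\ell=\ell\circ\phi^{-1}$ and $\tilde\pi=\phi\circ\pi:U\to\mathbb{R}^m$, and use $L=\tilde\ell\circ\tilde\pi$. Differentiating twice gives
\[
\nabla^2 L(z)[u,w] = \nabla^2\tilde\ell(\tilde\pi(z))\big[D\tilde\pi(z)u,\,D\tilde\pi(z)w\big] + \sum_{k=1}^m \partial_k\tilde\ell(\tilde\pi(z))\,\nabla^2\tilde\pi_k(z)[u,w].
\]
At a critical point $z$ of $L$ we have $\nabla L(z)=D\tilde\pi(z)^{\top}\nabla\tilde\ell=0$. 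Since $D\tilde\pi(z)$ is surjective, its transpose is injective, so $\nabla\tilde\ell(\tilde\pi(z))=0$ and the second sum vanishes. This use of surjectivity is the key point of the argument.

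What remains is $\nabla^2L(z)=D\tilde\pi(z)^{\top}\,\nabla^2\tilde\ell\,D\tilde\pi(z)$, a symmetric matrix. For every $u\in K$ we get $\nabla^2L(z)\,u = D\tilde\pi^{\top}\nabla^2\tilde\ell\,(D\tilde\pi\, u)=0$, so $K$ lies in the null space of the Hessian. Because the Hessian is symmetric, the dimension of its null space equals the multiplicity of the eigenvalue $0$, which is therefore at least $\dim K=d-m$. The rank bound $\operatorname{rank}\nabla^2L(z)\le m$ gives the same conclusion.

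Finally, for the interpretation in the statement: the fibers $\pi^{-1}(p)$ are $(d-m)$-dimensional submanifolds with tangent space $K$ (submersion theorem). $L$ is constant along these fibers, which is exactly why these directions are flat in the loss landscape.
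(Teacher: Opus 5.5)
Your proof is correct. The gradient step is the same chain-rule argument as in the paper, but your Hessian step takes a different and stronger route.

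The paper differentiates $L$ twice only along lines $z+tv$ with $v\in\ker D\pi(z)$. Using $x'(0)=0$ and $\nabla\ell(\pi(z))=0$, it concludes $v^\top\nabla^2L(z)\,v=0$, and then asserts that every such $v$ is a zero eigenvector. Vanishing of the quadratic form on a subspace does not by itself put that subspace in the kernel. For example, the symmetric matrix $H$ with $H_{11}=H_{22}=0$ and $H_{12}=H_{21}=1$ has $e_1^\top He_1=0$ but eigenvalues $\pm1$. The inference is valid when $\nabla^2L(z)\succeq0$, as at a local minimum. The statement, however, covers every critical point, including saddles of $\ell$, where the Hessian is indefinite.

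Your full bilinear second-order chain rule in a chart gives the factorization $\nabla^2L(z)=D\tilde\pi(z)^\top\,\nabla^2\tilde\ell\,D\tilde\pi(z)$. From it you get $\nabla^2L(z)\,u=0$ for $u\in K$ directly, together with the rank bound, and this is exactly what closes that step. The paper's curve computation could be repaired the same way, by running it in arbitrary directions $w$ and polarizing.

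You also make explicit two points the paper leaves implicit:
\begin{itemize}
\item $\nabla L(z)=0\Rightarrow\nabla\ell=0$ uses surjectivity of $D\pi(z)$;
\item $\ell$ on an abstract manifold only acquires a Euclidean gradient and Hessian through a chart.
\end{itemize}
The paper's route is shorter and avoids coordinates; yours costs a chart but actually proves the eigenvalue count.

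Small housekeeping points. $\tilde\pi$ is only defined on the preimage of the chart domain, a neighborhood of $z$ rather than all of $U$. You implicitly use $\ker D\tilde\pi(z)=K$, which holds because $D\phi$ is invertible. Finally, $\ell$ must be $C^2$.
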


\begin{proof}
  \textbf{Gradient claim.}
  By the chain rule, the differential of $L$ at $z$ satisfies
  \[
    \d L_z = \d\ell_{\pi(z)} \circ D\pi(z).
  \]
  For any $v \in \ker(D\pi(z))$, we have $D\pi(z)v = 0$, so
  \[
    \nabla L(z)^\top v = \d L_z(v) = \d\ell_{\pi(z)}(D\pi(z)v) = \d\ell_{\pi(z)}(0) = 0.
  \]
  Hence $\nabla L(z)$ is orthogonal to every redundant direction.

  \textbf{Hessian claim.}
  Consider the curve $z(t) = z + tv$ and set $x(t) = \pi(z(t))$.
  Differentiating at $t=0$ gives $x'(0) = D\pi(z)v = 0$.
  Differentiating $L(z(t)) = \ell(x(t))$ twice and evaluating at $t=0$ yields
  \[
    v^\top \nabla^2 L(z)\, v
    = \frac{\d^2}{\d t^2} L(z(t)) \bigg|_{t=0}
    = \nabla \ell(x(0))^\top x''(0) + x'(0)^\top \nabla^2 \ell(x(0))\, x'(0).
  \]
  The second term vanishes because $x'(0)=0$. At a critical point of $L$, the gradient claim above gives $\nabla L(z)=0$, which by the chain rule forces $\nabla \ell(\pi(z))=0$, so the first term also vanishes. Therefore $v^\top \nabla^2 L(z)\,v = 0$ for all $v \in \ker(D\pi(z))$.

  \textbf{Zero eigenvalue count.}
  Since $\pi$ is a submersion, $\operatorname{rank} D\pi(z) = m$, and rank-nullity gives $\dim\ker(D\pi(z)) = d-m$.
  The Hessian $\nabla^2 L(z)$ is symmetric, and every vector in this $(d-m)$-dimensional subspace is a zero eigenvector, so $\nabla^2 L(z)$ has at least $d-m$ zero eigenvalues.
\end{proof}

This proposition shows that when a representation contains redundant coordinates, the loss landscape contains intrinsically flat directions.
These directions correspond to variations that do not change the underlying state on the manifold, which may lead to an ill-posed optimization problem and unstable training dynamics.
On the contrast, a compact representation aligned with the intrinsic manifold removes these redundant degrees of freedom.

\subsection{Statistical advantage of Riemannian Flow Matching}

\begin{proposition}[Informal statistical advantage of Riemannian Flow Matching]
  Let $(\mathcal{M},g)$ be a compact $d$-dimensional Riemannian manifold embedded in $\mathbb{R}^D$ with $d<D$, and suppose the data distribution is supported on $\mathcal{M}$.
  Let $X_t\in \mathcal{M}$ be a conditional interpolation, and let
  \[
    \rho_t := \mathrm{Law}(X_t)\in \mathcal P(\mathcal{M})
  \]
  denote its marginal distribution at time $t$.
  Define the target data distribution by
  \[
    \rho_{\mathrm{data}} := \rho_1.
  \]
  Assume $\dot X_t\in T_{X_t}\mathcal{M}$, and define the conditional flow matching target by
  \[
    u_t^\star(x)=\mathbb E[\dot X_t \mid X_t=x].
  \]
  Assume further that $u^\star$ belongs to an $s$-smooth class of tangent vector fields on $\mathcal{M}$.
  Then the Riemannian flow matching estimator $\hat u$ satisfies
  \[
    \mathbb E\!\left[\int_0^1 \|\hat u_t-u_t^\star\|_{L^2(\rho_t;T\mathcal{M})}^2\,dt\right]
    \lesssim
    n^{-\frac{2s}{2s+d}}.
  \]
  Moreover, if $\hat \rho_{\mathrm{data}}$ denotes the generated distribution at terminal time $t=1$, then under standard flow stability assumptions,
  \[
    \mathbb E\,W_2^2(\hat \rho_{\mathrm{data}},\rho_{\mathrm{data}})
    \lesssim
    n^{-\frac{2s}{2s+d}}.
  \]

  By contrast, for Euclidean Flow Matching in the ambient space $\mathbb R^D$, one typically obtains
  \[
    \mathbb E\,W_2^2(\hat \rho_{\mathrm{data}}^{\,\mathrm{EFM}},\rho_{\mathrm{data}})
    \lesssim
    n^{-\frac{2s}{2s+D}}.
  \]
  Since $d<D$, we have
  \[
    n^{-\frac{2s}{2s+d}}
    \gg
    n^{-\frac{2s}{2s+D}},
  \]
  and therefore the error bound of Riemannian Flow Matching is asymptotically strictly better than that of Euclidean Flow Matching.
\end{proposition}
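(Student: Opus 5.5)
The plan is to treat the statement as a nonparametric regression problem on the manifold, followed by a stability argument that transfers velocity-field error to distribution error.

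\textbf{Step 1: reduce to regression.} For the first bound, I would observe that the flow matching target $u_t^\star(x)=\mathbb E[\dot X_t\mid X_t=x]$ is the $L^2(\rho_t)$-minimizer of $\mathbb E\|v_t(X_t)-\dot X_t\|^2$. This is the usual bias--variance identity, which holds here because tangent projection is an orthogonal projection and $\dot X_t\in T_{X_t}\mathcal M$. Then $\hat u$, the empirical risk minimizer over a sieve or neural class of $s$-smooth tangent fields, is a least-squares estimator with covariates $(t,X_t)$ supported on $[0,1]\times\mathcal M$.

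\textbf{Step 2: obtain the rate.} I would invoke the standard oracle inequality for least squares with bounded responses. Boundedness follows from compactness of $\mathcal M$, which bounds the geodesic velocities by $\mathrm{diam}(\mathcal M)$. The oracle inequality gives excess risk of order
\[\text{approximation error} + \frac{\log N(\delta)}{n}.\]
For an $s$-smooth class on a $d$-dimensional compact manifold, one covers $\mathcal M$ by finitely many charts and uses a partition of unity. This gives approximation error $\delta^{2s}$ with metric entropy $\delta^{-d}$, by Kolmogorov--Tikhomirov entropy bounds applied chartwise. Balancing $\delta^{2s}$ against $\delta^{-d}/n$ yields $n^{-2s/(2s+d)}$ up to logarithmic factors, which the $\lesssim$ absorbs.

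I would handle the time variable by treating $s$ as the smoothness in $x$ uniformly in $t$ and integrating over $t$, so that time adds no dimension. Alternatively, time can be absorbed by assuming Lipschitz dependence in $t$ and using a time grid whose size is chosen to be dominated by the spatial rate. The key point is that covering numbers depend on the intrinsic dimension $d$, not on $D$.

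\textbf{Step 3: transfer to $W_2$.} Under the assumed flow stability, namely that $\hat u_t$ is Lipschitz on $\mathcal M$ uniformly in $t$ with constant $K$, I would couple the two ODEs started from the same prior sample. Write $X_t$ for the solution driven by $u^\star$ and $\hat X_t$ for the solution driven by $\hat u$. A Grönwall argument on $\mathbb E\,d_g(\hat X_t,X_t)^2$ gives
\[
\mathbb E\,d_g(\hat X_1,X_1)^2\le e^{CK}\int_0^1\|\hat u_t-u_t^\star\|^2_{L^2(\rho_t)}\,dt.
\]
Here curvature terms are controlled by compactness, for instance by comparing $d_g$ with the ambient Euclidean distance. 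Since $(\hat X_1,X_1)$ is a coupling of $\hat\rho_{\mathrm{data}}$ and $\rho_{\mathrm{data}}$, taking the infimum over couplings gives the stated $W_2^2$ bound.

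\textbf{Step 4: the Euclidean comparison.} For the ambient case, the same argument with covariates viewed in $\mathbb R^D$ and an ambient function class of entropy $\delta^{-D}$ gives $n^{-2s/(2s+D)}$. The final inequality is then elementary: $2s/(2s+d)>2s/(2s+D)$, and I would read the statement's $\gg$ as saying that the Riemannian rate is the faster one.

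\textbf{Main obstacle.} I expect Step 3 to be hardest, because $u^\star$ is generally not Lipschitz near $t=1$ owing to the $1/(1-t)$ factor. The error bound must also hold in $L^2(\rho_t)$ rather than uniformly, while Grönwall needs pointwise control along $\hat X_t$, whose law differs from $\rho_t$. My first attempt would be to assume Lipschitz regularity on $[0,1-\epsilon]$ plus early stopping, so that this assumption is exactly the ``standard flow stability'' hypothesis. The approximation bias and stopping error from $\epsilon$ would then be balanced against the rate.
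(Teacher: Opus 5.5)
Your proposal follows the paper's proof sketch step for step. You use the regression identity to reduce the problem to least squares for $u^\star$ on the intrinsic state space, an entropy bound depending on $d$ rather than $D$ to get $n^{-2s/(2s+d)}$, and flow stability to pass to $W_2$. Your oracle inequality, chartwise entropy bound and Gr\"onwall coupling fill in what the paper leaves implicit, and you read the paper's $\gg$ correctly: since $d<D$, it should literally be $\ll$.

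The one place your extra detail overreaches is the time variable. If $t$ enters only Lipschitz, a time grid costs rate: balancing $m^{-2}$ against $(n/m)^{-2s/(2s+d)}$ gives a strictly slower exponent. Smoothness in $x$ alone, with no regularity in $t$, gives no control at all. So, exactly like the paper, you are tacitly assuming enough regularity in $t$ that time adds no effective dimension.
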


\begin{proof}[Proof sketch]
  The flow matching population loss is
  \[
    \mathcal L(u)=\mathbb E\big[\|u_t(X_t)-\dot X_t\|_g^2\big].
  \]
  By the usual regression identity,
  \[
    \mathcal L(u)-\mathcal L(u^\star)
    =
    \int_0^1 \|u_t-u_t^\star\|_{L^2(\rho_t;T\mathcal{M})}^2\,dt.
  \]
  Therefore, estimating the flow matching vector field reduces to a nonparametric regression problem on the intrinsic state space where the samples live.

  If the model is formulated intrinsically on $\mathcal{M}$, then the relevant function class is an $s$-smooth class on a $d$-dimensional manifold.
  Standard entropy bounds on manifolds imply that its statistical complexity depends on the intrinsic dimension $d$, which yields the rate
  \[
    n^{-\frac{2s}{2s+d}}.
  \]
  In contrast, a naive Euclidean formulation in $\mathbb R^D$ uses a $D$-dimensional ambient smooth class, leading to the slower rate
  \[
    n^{-\frac{2s}{2s+D}}.
  \]
  Finally, the terminal generated distribution is obtained by pushing the reference distribution through the learned flow up to time $t=1$.
  Standard stability of the flow map implies that the error in the learned vector field controls the terminal distribution error, hence
  \[
    \mathbb E\,W_2^2(\hat \rho_{\mathrm{data}},\rho_{\mathrm{data}})
    \lesssim
    n^{-\frac{2s}{2s+d}}.
  \]
  Combining the two bounds and using $d<D$ shows that Riemannian Flow Matching has a strictly sharper asymptotic error estimate than Euclidean Flow Matching.
\end{proof}

This proposition provides a theoretical justification for the statistical superiority of Riemannian Flow Matching over Euclidean Flow Matching when the data distribution is supported on a low-dimensional manifold embedded in a high-dimensional ambient space.
When the data distribution lies on a \(d\)-dimensional manifold, learning flow matching in a manifold representation reduces the effective dimension of the regression problem for the velocity field. As a consequence, the Wasserstein error of the learned generative distribution scales with \(d\) rather than the ambient dimension \(D\), leading to improved asymptotic rates.

\section{Additional Experiment Results}

\subsection{Full Results}

\begin{table}[htbp]
  \centering
  \footnotesize
  \caption{Full results on the HumanML3D dataset in H3D format.\label{tab:h3d_full_results_h3d_format}}
  \begin{tblr}{
      width=\linewidth,
      colspec={Q[l] *{7}{Q[c]}},
      row{1,2}={font=\bfseries},
      colsep=3pt,
      rowsep=1pt,
      hline{1}={0.08em},
      hline{2}={2-8}{0.05em},
      hline{4}={0.04em},
      hline{13}={0.04em},
      row{13}={bg=gray!15},
      hline{3}={0.05em},
      hline{Z}={0.08em}
    }
    \SetCell[r=2]{c,m} Method & \SetCell[c=7]{c} HumanML3D Format & & & & & & \\
    & FID$\downarrow$ & R@1$\uparrow$ & R@2$\uparrow$ & R@3$\uparrow$ & MM-Dist$\downarrow$ & Div$\rightarrow$ & MM$\uparrow$ \\
    GT & 0.002 & 0.511 & 0.703 & 0.797 & 2.974 & 9.503 & 2.799 \\
    MLD & 0.473 & 0.481 & 0.673 & 0.772 & 3.196 & 9.724 & 2.413 \\
    T2M-GPT & 0.116 & 0.492 & 0.679 & 0.775 & 3.121 & 9.761 & 1.856 \\
    MotionGPT & 0.232 & 0.492 & 0.681 & 0.733 & 3.096 & \textbf{9.528} & 2.008 \\
    MoMask & \underline{0.045} & 0.521 & \underline{0.713} & 0.807 & 2.958 & -- & 1.241 \\
    MotionGPT-2 & 0.191 & 0.496 & 0.691 & 0.782 & 3.080 & 9.860 & 2.137 \\
    MotionLCM & 0.304 & 0.505 & 0.705 & 0.805 & 2.986 & 9.607 & 2.259 \\
    MotionCLR & 0.269 & \textbf{0.544} & \textbf{0.732} & \textbf{0.831} & \textbf{2.806} & 9.607 & 1.985 \\
    MotionLab & 0.167 & -- & -- & \underline{0.810} & \underline{2.830} & 9.593 & \textbf{2.912} \\
    MARDM & 0.114 & 0.500 & 0.695 & 0.795 & -- & -- & 2.231 \\
    Ours & \textbf{0.043}\(^{\pm .002}\) & \underline{0.525}\(^{\pm .002}\) & 0.711\(^{\pm .002}\) & \(0.805^{\pm .002}\) & \( 2.930^{\pm .007} \) & \underline{9.555}\(^{\pm .060}\) & \underline{2.748}\(^{\pm .023}\) \\
  \end{tblr}
\end{table}

\begin{table}[htbp]
  \centering
  \footnotesize
  \caption{Full results on the HumanML3D dataset in MotionStreamer format. \label{tab:h3d_full_results_ms_format}}
  \begin{tblr}{
      width=\linewidth,
      colspec={Q[l] *{7}{Q[c]}},
      row{1,2}={font=\bfseries},
      colsep=3pt,
      rowsep=1pt,
      hline{1}={0.08em},
      hline{2}={2-8}{0.05em},
      hline{4}={0.04em},
      hline{9}={0.04em},
      row{9}={bg=gray!15},
      hline{3}={0.05em},
      hline{Z}={0.08em}
    }
    \SetCell[r=2]{c,m} Method & \SetCell[c=7]{c} MotionStreamer Format & & & & & & \\
    & FID$\downarrow$ & R@1$\uparrow$ & R@2$\uparrow$ & R@3$\uparrow$ & MM-Dist$\downarrow$ & Div$\rightarrow$ & MM$\uparrow$ \\
    GT & 0.002 & 0.702 & 0.864 & 0.914 & 15.151 & 27.492 & -- \\
    {\scriptsize MLD} & 18.236 & 0.546 & 0.730 & 0.792 & 16.638 & 26.352 & -- \\
    {\scriptsize T2M-GPT} & 12.475 & 0.606 & 0.774 & 0.838 & 16.812 & 27.275 & -- \\
    {\scriptsize MotionGPT} & 14.375 & 0.456 & 0.598 & 0.628 & 17.892 & 27.114 & -- \\
    {\scriptsize MoMask} & 12.232 & 0.621 & 0.784 & 0.846 & 16.138 & 27.127 & -- \\
    {\scriptsize MotionStreamer} & \underline{11.790} & \underline{0.631} & \underline{0.802} & \underline{0.859} & \underline{16.081} & \underline{27.284} & -- \\
    Ours & \textbf{5.835}\(^{\pm .060}\) & \textbf{0.710}\(^{\pm .002}\) & \textbf{0.850}\(^{\pm .002}\) & \textbf{0.899}\(^{\pm .001}\) & \textbf{15.683}\(^{\pm .022}\) & \textbf{27.672}\(^{\pm .076}\) & \textbf{14.906}\(^{\pm .208}\) \\
  \end{tblr}
\end{table}

\begin{table}[htbp]
  \centering
  \caption{Full results of MotionMillion dataset. \label{tab:mm_full_results}}
  \begin{tblr}{
      width=\linewidth,
      colspec={Q[l] Q[c] Q[c] Q[c] Q[c] Q[c] Q[c]},
      row{1}={font=\bfseries},
      row{5}={bg=gray!15},
      row{6}={bg=gray!15},
      colsep=4pt,
      rowsep=2pt,
      hline{1}={0.08em},
      hline{2}={0.05em},
      hline{5}={0.04em},
      hline{Z}={0.08em},
    }
    Method & Guidance Scale & FID$\downarrow$ & R@1$\uparrow$ & R@2$\uparrow$ & R@3$\uparrow$ \\
    ScaMo & -- & 89.0 & 0.67 & 0.81 & 0.87 \\
    MotionMillion-3B & -- & 10.8 & 0.79 & 0.91 & 0.94 \\
    MotionMillion-7B & -- & \underline{10.3} & \underline{0.79} & 0.90 & 0.94 \\
    Ours (0.5B+1.7B) & 2.0 & \textbf{5.6} & \underline{0.81} & \underline{0.92} & \underline{0.95} \\
    Ours (0.5B+1.7B) & 3.0 & \underline{7.8} & \textbf{0.86} & \textbf{0.95} & \textbf{0.97} \\
  \end{tblr}
\end{table}

We report the full results of our method and all baselines on HumanML3D in both the original H3D format (\Cref{tab:h3d_full_results_h3d_format}) and the MotionStreamer format (\Cref{tab:h3d_full_results_ms_format}), together with the full results on MotionMillion (\Cref{tab:mm_full_results}).
For HumanML3D, each experiment is repeated over 20 independent runs, and we report the mean with 95\% confidence intervals computed using the \(t\)-distribution.
For MotionMillion, the dataset scale and our compute budget limit evaluation to a single run.

\subsection{Training Stability}

\begin{figure}[htbp]
  \centering
  \includegraphics[width=0.9\linewidth]{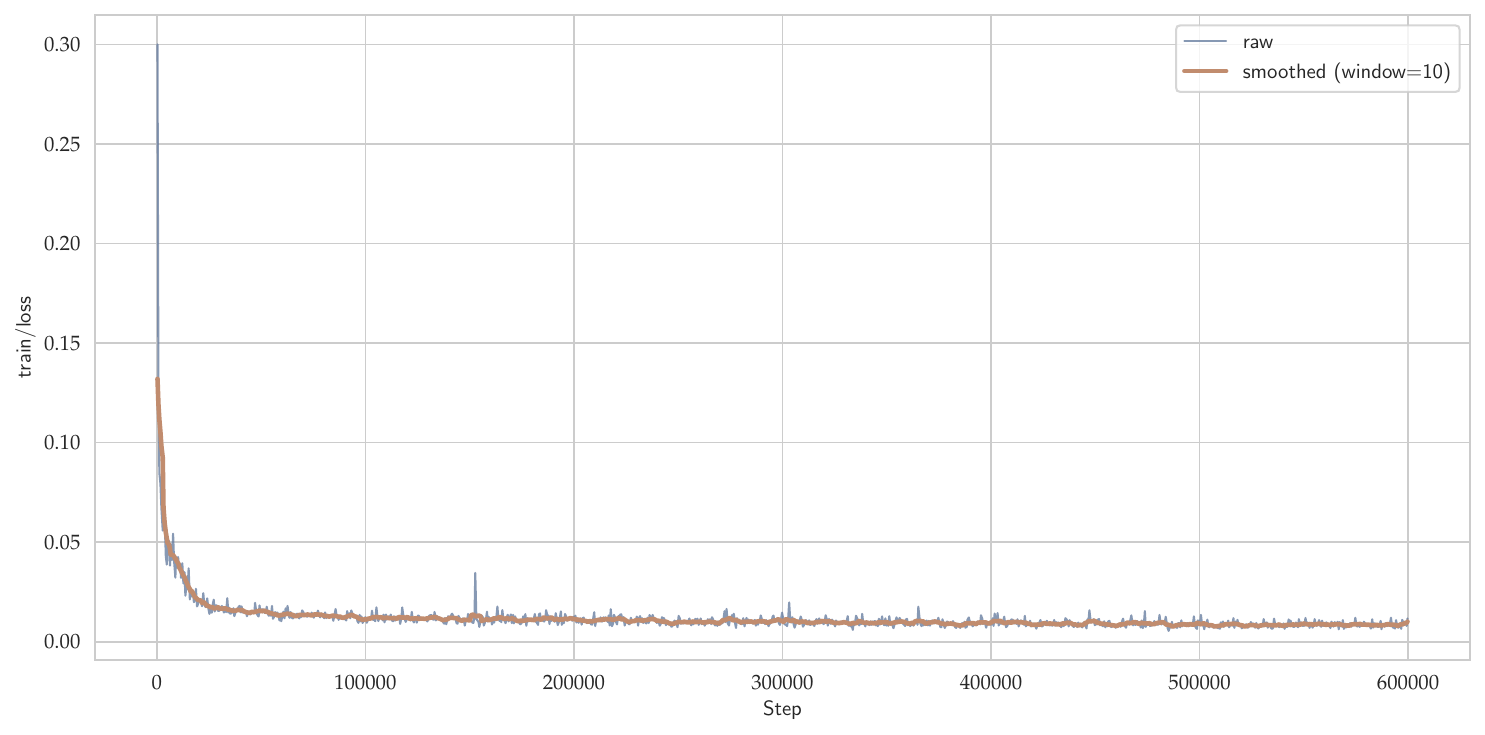}
  \caption{Training loss curves for MotionMillion.\label{fig:mm_loss}}
\end{figure}

\begin{figure}[htbp]
  \centering
  \includegraphics[width=0.9\linewidth]{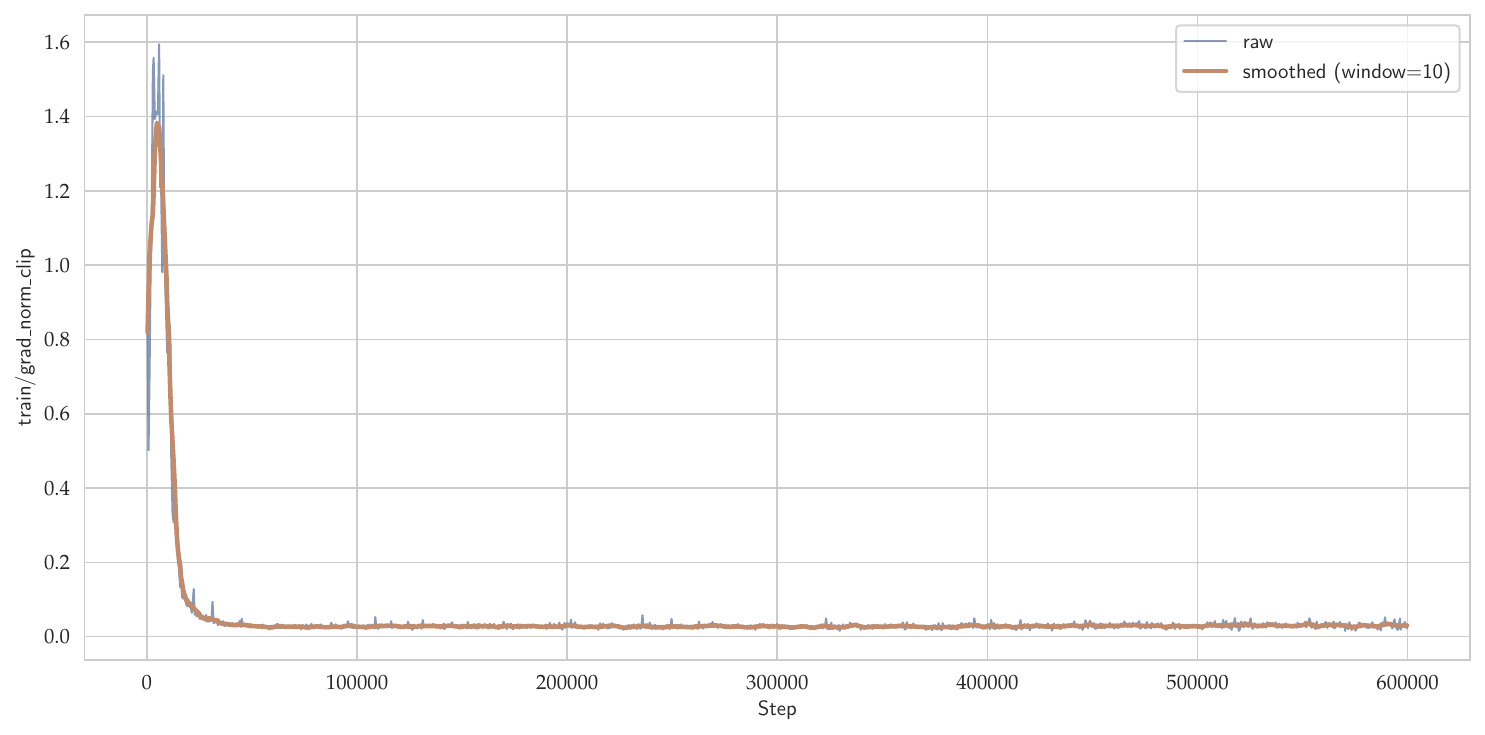}
  \caption{Gradient norm curves for MotionMillion.\label{fig:mm_grad_norm}}
\end{figure}

We monitor the training loss and gradient norm curves for our MotionMillion experiments, as shown in \Cref{fig:mm_loss} and \Cref{fig:mm_grad_norm}.
The loss curve shows a smooth and steady decrease throughout training.
The gradient norm curve also remains stable with only one spike in the beginning of training, and that's why we adopt a warm-up learning rate strategy and a single gradient clipping threshold for the entire training process.
And at the end of training, both the loss and gradient norm curves are still stable, which suggests that the training process is well-behaved and does not diverge or collapse.

\section{Implementation Details}

\subsection{Model Architecture}

Our model uses a Diffusion Transformer backbone, while the text encoder varies across datasets.

For HumanML3D, we use Qwen3-Embedding-0.6B \citep{qwen3embedding} to extract a 1024-dimensional text feature vector.
We fuse the text features with the time embedding through an MLP and use the fused representation as the conditioning input to the Diffusion Transformer.

For MotionMillion, we use Qwen3-1.7B \citep{qwen3} to extract text features.
Unlike Qwen3-Embedding-0.6B, Qwen3-1.7B is a decoder-only large language model.
Following recent advances in image generation \citep{zimage,sd3}, we therefore adopt a single-stream multi-modal Diffusion Transformer (MM-DiT), where text and motion tokens are concatenated along the sequence dimension and processed as a unified input stream.

\subsection{Training Details}

\begin{table}[htbp]
  \centering
  \footnotesize
  \caption{Training hyperparameters for different model scales.\label{tab:train_hyper_param}}
  \begin{tblr}{
      width=0.72\linewidth,
      colspec={Q[l,1.6] Q[c,1] Q[c,1]},
      row{1}={font=\bfseries},
      colsep=4pt,
      rowsep=2pt,
      hline{1}={0.08em},
      hline{2}={0.05em},
      hline{Z}={0.08em},
      cells={valign=m},
    }
    Hyperparameter & RMG-base & RMG \\
    Input Dim. & 91 & 91 \\
    Hidden Dim. & 384 & 1024 \\
    No. of Layers & 6 & 24 \\
    No. of Heads & 8 & 8 \\
    FFN Multiplier & 8 & 4 \\
    Max Learning Rate & 1e-4 & 1e-4 \\
    Learning Rate Scheduler & Cosine w/ warmup & Cosine w/ warmup \\
    Ratio of Warmup Steps & 0.08 & 0.08 \\
    Effective Batch Size & \(32\times 8 \times 1\) & \(16 \times 8 \times 2\) \\
    Training Steps & 150k & 600k \\
    Gradient Clipping & 0.5 & 0.5 \\
  \end{tblr}
\end{table}

We list the training hyperparameters for different model scales in \Cref{tab:train_hyper_param}.
The effective batch size is calculated as the product of the per-device batch size, the number of devices, and the gradient accumulation steps.

\subsection{Conversion Functions}
\label{sec:conversion_functions}

\begin{figure}[htbp]
  \centering
  \includegraphics[width=0.9\linewidth]{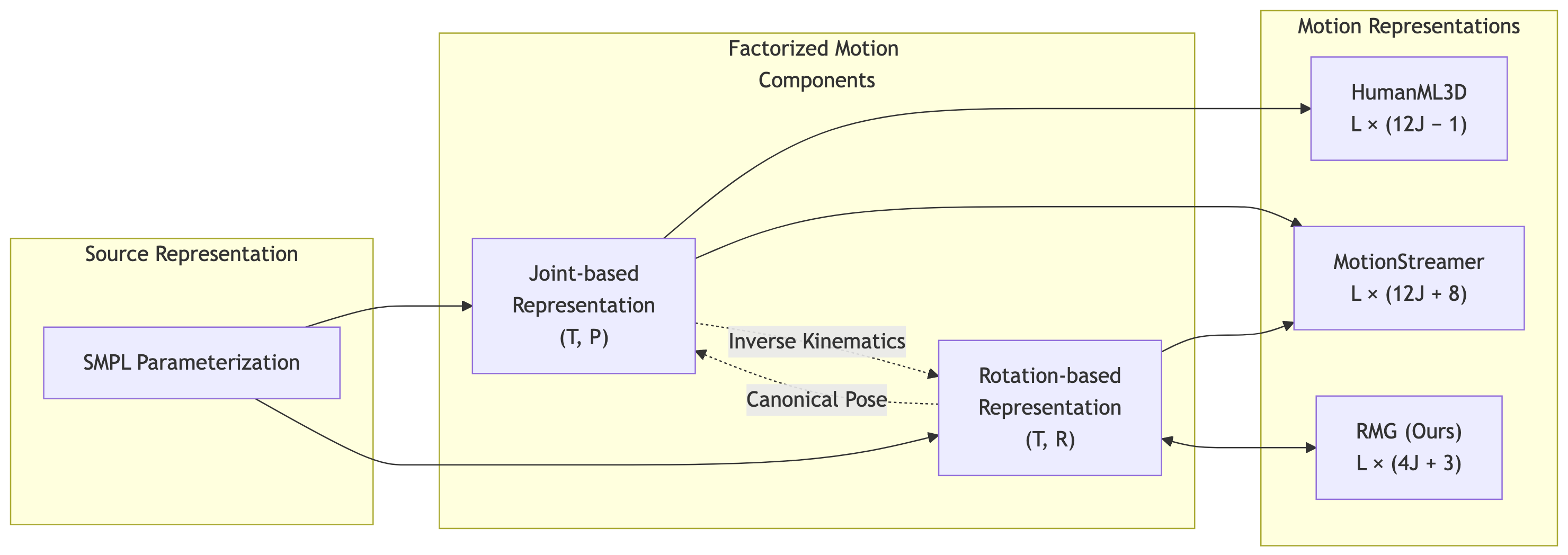}
  \caption{The conversion functions between different motion representations.\label{fig:conversion_functions}}
\end{figure}

We implement two extra conversion functions to convert our representation of human motion to HumanML3D format and MotionMillion format, respectively.
The core idea is to first map our representation back to either a rotation-based or a joint-based representation, and then apply the same preprocessing pipeline used by the target format.
\Cref{fig:conversion_functions} summarizes these conversion functions.

\section{Limitations}
\label{sec:limitations}

While our proposed Riemannian Motion Generation framework demonstrates promising results in motion representation and generation, there are several limitations that warrant discussion:

\begin{itemize}
  \item The Riemannian representation has not been tested on the auto-regressive setting though it's been a popular stream in recent human motion generation works.
  \item More conditioning modalities (e.g., music, video) and interactive generation scenarios (e.g., human-object interaction) have not been explored.
  \item The generation duration is currently limited to 10 seconds (300 frames), and scaling to longer horizons may require larger data collection efforts and more computational resources.
  \item More body configurations (e.g., hands and face) are not included in the current framework, and extending to these richer configurations may require additional design considerations.
  \item Motion editing and temporal in-painting are not studied in this work, and it remains to be seen how the Riemannian representation can be adapted to these tasks.
\end{itemize}

\section{Broader Impacts}

This work may have several positive societal impacts.
First of all, it may continue to drive research on large-scale human motion generation.
Secondly, it may enable more efficient and scalable motion generation systems, which can be beneficial for various applications such as gaming, virtual reality, and human-computer interaction.
Thirdly, it may also inspire future research on geometric representations and low-dimensional manifold modeling in other domains beyond human motion.
Lastly, more advanced topics such as world modeling and embodied agents may also benefit from the rapid development of human motion generation.

At the same time, this line of research may also introduce potential risks.
More capable human motion generation systems could be misused to synthesize deceptive or misleading human behaviors, and biases or artifacts in the training data may be inherited or amplified in generated motions.
In addition, deploying such models in embodied or interactive systems without sufficient safeguards may create safety, fairness, or reliability concerns in downstream applications.
We therefore encourage future work to pair technical advances with careful dataset curation, clear disclosure of synthetic content, and application-level safety mechanisms.
}{}

\end{document}